\title[Certifying Robustness via Topological Representations]{Certifying Robustness via Topological Representations}
  \author{\Name{Jens Agerberg} \Email{jensag@kth.se}\\
  \addr KTH Royal Institute of Technology
  \AND
  \Name{Andrea Guidolin} \Email{guidolin@kth.se}\\
  \addr {KTH Royal Institute of Technology \& University of Southampton}
  \AND
  \Name{Andrea Martinelli} \Email{andrea.martinelli-2@studenti.unitn.it}\\
  \addr University of Trento
  \AND
  \Name{Pepijn Roos Hoefgeest} \Email{pepijnrh@kth.se}\\
  \addr KTH Royal Institute of Technology
  \AND
  \Name{David Eklund} \Email{david.eklund@ri.se}\\
  \addr RISE Research Institutes of Sweden
  \AND
  \Name{Martina Scolamiero} \Email{scola@kth.se}\\
  \addr KTH Royal Institute of Technology
 }
\begin{document}

\maketitle

\section{Introduction}

In machine learning, the ability to obtain data representations that capture underlying geometrical and topological structures of data spaces is crucial. 
A common approach in Topological Data Analysis to extract multi-scale intrinsic geometric properties of data is persistent homology (PH) \citep{carlsson2009topology}. As a rich descriptor of  geometry,  PH has been used in machine learning pipelines in areas such as bioinformatics, neuroscience and material science \citep{dindin2020topological,colombo2022tool,lee_materialscience}. The key difference of PH compared to other methods in Geometric Deep Learning is perhaps the emphasis of theoretical stability results: PH is a Lipschitz function, with known Lipschitz constants, with respect to appropriate metrics on data and representation space \citep{cohen2005stability,skraba2020wasserstein}.

However, composing the PH pipeline with a neural network presents challenges with respect to the stability of the representations thus learned: they may lose stability or the stability may become insignificant in practice in case PH representations are composed with neural networks that have large Lipschitz constants. Moreover, the constant of the neural network may be difficult to compute or to control. While robustness to noise of PH-machine learning pipelines has been studied empirically \citep{turkevs2021noise}, we formulate the problem in the framework of adversarial learning and propose a neural network that can learn stable and discriminative geometric representations from persistence. Our contributions may be summarized as follows:


\begin{itemize}
    \item We propose the Stable Rank Network (SRN), a neural network architecture taking PH as input, where the learned representations enjoy a Lipschitz property w.r.t.\ Wasserstein and Bottleneck metrics.
    \item

     We link the stability of the PH pipeline with robustness at test time for classifiers in adversarial learning. In particular we provide certified robustness for the SRN architecture.
    
    \item On the ORBIT5K benchmark dataset, we demonstrate that the method can learn useful representations with certified robustness, when on the other hand integrating the PH pipeline with neural networks in a standard fashion can result in poor robustness properties.
\end{itemize}

\section{Background}
\subsection{Persistent homology and learnable vectorizations }
\label{subsec:PH}



In persistent homology, a filtration of a finite metric space is created by considering proximity of data points at different scales. Simplicial homology is applied to this filtration and the evolution of homology representatives to the growth of the parameter scale is tracked. These geometric patterns can be encoded in a persistence diagram (PD), a multi-set of points in the plane, on which Wasserstein metrics $W_p$ ($p \in [1, \infty]$ with $W_\infty$ being called Bottleneck) can be considered (see also Appendix \ref{appendix:dists_pers_diags}). 
Persistent homology enjoys a stability property expressed as a Lipschitz condition w.r.t.\ $W_\infty$ between PDs and Gromov-Hausdorff distance between metric spaces \citep{chazal2009gromov}.



A PD being a multi-set, Perslay \citep{carriere2020perslay} proposes to use a neural network based on the Deep Set architecture \citep{zaheer2017deep} to learn representations. Alternatively, various vectorizations \citep{Silhouettes,landscapes, Ali_2023} have been proposed, which are deterministic representations of persistence diagrams with values in a Euclidean ($L_2$) space, enabling them to be included in standard neural network pipelines.

Stable ranks \citep{scolamiero2017multidimensional, chacholski2020metrics} are learnable vectorizations with strong stability properties: while linear vectorizations of PDs to $L_2$ spaces (e.g.\ \cite{adams2017persistence}) can only be stable w.r.t.\ $W_1$ \citep{skraba2020wasserstein}, stable ranks can be designed to be 1-Lipschitz w.r.t $W_p$ for any $p \in [1, \infty]$.


\subsection{Lipschitz neural networks and robustness at test time}
\label{subsec:LipNN}

The global Lipschitz constant of a deep neural network is typically very large and may be difficult to compute and control during training. Alternative networks may however be designed to enforce a Lipschitz property, while remaining trainable and expressive. In \cite{zhang2021towards}, the authors propose to replace the classical neuron in an MLP with a unit of the form:
\begin{equation*}
    u(x, w, b) = \|x - w\|_{\infty} + b,
\end{equation*}
where $w, b \in \mathbb{R}^d$ are trainable parameters and $x \in \mathbb{R}^d$ is the input. The unit is by design 1-Lipschitz w.r.t.\ the $L_\infty$ metric on $\mathbb{R}^d$. Such neurons are stacked to form a layer and layers are composed to form a neural network enjoying the same stability property.\\

Lipschitz neural networks will be used to prove robustness at test time, in the present context referring to the property of a method of maintaining the prediction under small metric perturbations of the input. We consider robustness of trained classifiers defined on a metric space. 
 Let $(\mathcal{X}, d_{\mathcal{X}})$ be a metric space and  $g: \mathcal{X} \to \{1, \ldots, C\}$ 
 a trained classifier on $\mathcal{X}$. 
 For a sample $x \in \mathcal{X}$ with ground truth label $c$, we say that $g$ is $\epsilon$-robust at $x$ if: 
 \[
    g(x') = c, \forall x' \in \mathcal{X} \text{ s.t. } d_{\mathcal{X}}(x,x') \leq \epsilon .
\]
We are interested in estimating the \emph{robustness radius} of $x$, that is the maximum $\epsilon$ for which $g$ is $\epsilon$-robust at $x$ sampled from the data distribution. The problem of computing the robustness radius for a data sample is NP-complete even for standard MLP classifiers \citep{katz2017reluplex}.
  \textit{Adversarial attacks} are methods designed to find small perturbations of a sample that lead to a wrong classification, thus yielding an upper bound for the robustness radius at the sample. 
The field of \textit{certified robustness} \citep{li2023sok} provides instead lower bounds for the robustness radius at a data sample,  that is for a sample $x$ from the data distribution, an $\epsilon$ may be computed such that $g$ is $\epsilon$-robust at $x$.

\section {Methods}

\subsection{The Stable Rank Network and its robustness properties}
\label{SRN}
To learn a function from the space $\mathcal{PD}$ of PDs to $\mathbb{R}^C$ with a controlled Lipschitz constant we propose Stable Ranks Network (SRN), an architecture in two steps (see Figure \ref{fig:figure_pipeline}). The first step is a  vectorization using stable ranks (Section \ref{subsec:PH}), which depends on a learnable reparametrization of the filtration scale. The second step allows to learn interactions between points in the PD through a Lipschitz neural network (Section \ref{subsec:LipNN}).



The stable rank can be seen as a vectorization method for PDs which depends on parameters $(p,F)$, where $p\in[1,\infty]$ and $F: \mathbb{R}_{\geq 0} \to \mathbb{R}_{\geq 0}$ is an increasing bijection that can be interpreted as a reparameterization of the filtration scale used to apply persistent homology. For any choice of parameters $(p,F)$, with $F$ a $K$-Lipschitz function, the stable rank determines a function $r_{p,F}:\mathcal{PD} \to \mathbb{R}^\infty$ which enjoys the following Lipschitz condition:
\[
\| r_{p,F}(X) - r_{p,F}(Y)\|_\infty \le K \, W_p (X,Y),
\]
where $W_p$ denotes the $p$-Wasserstein distance between PDs  (see Appendix \ref{appendix:dists_pers_diags}).
If $F=\operatorname{id}$ is the identity on $\mathbb{R}_{\ge 0}$, the corresponding Lipschitz constant is $K=1$, for any $p\in [1,\infty]$.

In practice, one chooses an appropriate subset $\mathcal{S}$ of $\mathcal{PD}$ and a sufficiently large integer $N$ such that $r_{p,F}$ can be viewed as a function $\mathcal{S} \to \mathbb{R}^N$. In our proposed SRN architecture, $F$ is parametrized with trainable weights and $r_{p,F}$ is followed by a Lipschitz neural network with inputs in $\mathbb{R}^N$ and outputs in $\mathbb{R}^C$. Since $r_{p,F}$ is $K$-Lipschitz and all layers of the Lipschitz neural network are $1$-Lipschitz functions, their composition $f$ is a $K$-Lipschitz function from $\mathcal{S}$ equipped with $W_p$ to $\mathbb{R}^C$ equipped with the $L_{\infty}$ metric.

If we now consider a classifier based on our neural network, $g = \operatorname{argmax} \circ f$, we can (following \cite{tsuzuku2018lipschitz}) conclude that $g$ is $\epsilon$-robust at $x$ for all $\epsilon \geq \frac{M_x}{2K}$, where $K$ is the Lipschitz constant of $f$ and $M_x$ is the prediction margin of $f$ at $x$, defined as the difference between the largest logit (corresponding to the correct class) and the second largest logit: $     M_x = f(x)_c - \max_{i \neq c} f(x)_i$. Thus since the Lipschitz constant $K$ is known, computing a lower bound for the robustness radius at a sample $x$ only requires a forward pass of $x$ in the neural network to get $M_x$.

\subsection{Adversarial examples in spaces of persistence diagrams}
\label{advmethod}


Consider a trained classifier $g: \mathcal{PD} \to \{1, \ldots, C\}$ defined on the space of persistence diagrams, equipped with $W_p$ metrics $(p\in [1,\infty])$. If the Lipschitz constant of $g$ is not known, we estimate robustness of the classifier by searching for adversarial examples. The  proposed method (Appendix \ref{appendix:adv_attacks}), is derived from \cite{carlini2017towards}, by formulating the objective:
\[
    \min_{x'} W_p(x,x') - \lambda \mathcal{L}(g(x')),
\]
and iterating in the direction opposite to the gradient, i.e.\ increasing the loss $\mathcal{L}$ (e.g.\ Cross-Entropy) but keeping $W_p$ small ($\lambda$ being a hyperparameter).

\section{Results }
\label{sec:results}

We train SRN (see Appendix \ref{appendix:impl_details_sr_linf} for details) on the ORBIT5K dataset, using only PDs corresponding to degree-$1$ homology $H_1$, and we choose stability w.r.t.\ $W_\infty$ by setting $p=\infty$. In Table \ref{tab:accs_all_methods} the accuracy of the method is compared to various accuracies reported for other methods on the same problem, showing that despite the constraint of learning a Lipschitz function, competitive performance may be achieved.

We then reimplement Perslay using only $H_1$ PDs as input (achieving accuracy of $84.4 \pm 0.6$). In Table \ref{tab:rob_accs}, we report for various values of $\epsilon$, the percentage of samples in the test set such that the method described in Section \ref{advmethod} fails to produce an adversarial example within a radius of $\epsilon$ w.r.t.\ $W_\infty$. This constitutes an upper bound of the robust accuracy. While the method achieves high accuracy for the test samples, it degrades significantly even for very small $\epsilon$-neighborhoods around the samples. To get an idea of the scale we show the average intra-class and inter-class distances in Table \ref{tab:avgdisth1}. While adversarial examples are common for various types of data and neural networks, the geometry of Wasserstein metrics (e.g.\ the presence, arbitrarily close to any PD, of PDs where points are added close to the diagonal) may result in particularly poor robustness at test time.


In the case of SRN, for a sample in the test set, instead of trying to find adversarial examples, we can directly compute its certified $\epsilon$-robustness following Section \ref{SRN}. For various threshold values of $\epsilon$ we can then compute a lower bound of the robust accuracy by computing the percentage of samples in the test set for which the $\epsilon$-robustness is above this threshold (Table \ref{tab:rob_accs}). Because of this, the robust accuracies of SRN are underestimates whereas for Perslay they are overestimates. Despite this, the robust accuracies are higher for SRN and the $\epsilon$-radii appear to be meaningful compared to the average distances in Table \ref{tab:avgdisth1}.

\begin{table}[hbtp]
\floatconts
    {tab:rob_accs}
    {\caption{Robust accuracy in growing $\epsilon$-radius around samples of the test set. For Perslay we report accuracy based on robustness to the adversarial attacks described in Section \ref{advmethod}. For SRN we report accuracy based on certified robustness.}}
    {\begin{tabular}{llllll}
    \toprule
         & \bfseries Acc. & \bfseries Acc. $\epsilon=10^{-5}$ & \bfseries Acc. $\epsilon=10^{-2}$ & \bfseries Acc. $\epsilon=10^{-1}$ & \bfseries Acc. $\epsilon=10^{0}$\\
         \midrule
         Perslay ($H_1$ only) & 84.4 & 27.4 & 27.4 & 24.8 & 24.8 \\
         SRN ($H_1$ only) & 79.6 & 79.6 & 78.8 & 74.6 & 51.3 \\
         \bottomrule
    \end{tabular}}
\end{table}

\section{Conclusion}


By preserving Lipschitz properties of learned representations with respect to the Wasserstein and Bottleneck metrics between PDs, SRN allows to certify the robustness of samples in a dataset. Leveraging existing stability results of PH, a ML pipeline with interesting robustness properties w.r.t.\ appropriate distances in the input space (e.g.\ point clouds, function spaces) can thus be designed.

\newpage

\bibliography{main}

\newpage
\appendix



\section{Persistent homology and distances}

\subsection{Persistence diagrams}
\label{subsec:PDs}



Consider the subset $U := \{(a,b)\in \mathbb{R}_{\ge 0}\times [0,\infty ] \mid a\leq b\}$ of the extended plane. 
A \emph{persistence diagram} $D$ is a multiset of elements of $U$, which in this work we assume to be finite. By definition of multiset, $D$ is a pair $(X,\mu)$, where $X$ is a finite subset of $U$ and $\mu : X \to \mathbb{N}_{>0}$ is a function, interpreted as a multiplicity function. The cardinality of $D$, denoted by $\operatorname{rank}(D)$, is $\sum_{x\in X}\mu (x)$. A persistence diagram $D$ can be viewed as the set $\{ (x,i_x)\in X\times \mathbb{N}_{>0} \mid 1\le i_x \le \mu (x) \}$. In what follows, for simplicity we disregard the second components of the elements $(x,i_x)$ of $D$, using the notation $D=\{x_i\}_{i = 1, \ldots ,\operatorname{rank}(D)}$, with $x_i=(a_i,b_i)\in U$, where it may happen that $x_j=x_k$ for some $j\ne k$. We use the notation $\mathcal{PD}$ for the collection of all persistence diagrams.


In topological data analysis \citep{carlsson2009topology}, persistence diagrams are usually obtained by applying $q$-th simplicial homology $H_q$ (for some $q\in \mathbb{N}$) to a nested sequence $K_0\subseteq K_1 \subseteq \cdots$ of simplicial complexes, called a filtration, associated with the data. Details on the pipeline of persistent homology and on various constructions to transform the data into a filtration of simplicial complexes are included for example in \cite{otter2017roadmap,oudot2017persistence}. In this work, the data is in the form of point clouds, which are transformed into filtrations of simplicial complexes using the alpha-complex construction \citep{edelsbrunner1995union}.  


An element $x=(a,b)$ of a persistence diagram is called a \emph{point at infinity} if $b=\infty$. In this work, all persistence diagrams we consider do not have points at infinity. Starting from point cloud data and using the Alpha complex construction and simplicial homology $H_q$, one always obtains persistence diagrams without points at infinity for every $q>0$.


\subsection{Wasserstein distance between persistence diagrams} 
\label{appendix:dists_pers_diags}

Let $U$ be the subset of the extended plane defined in Section \ref{subsec:PDs}, and consider its subset $\Delta :=  \{ (a,a) \mid a \in \mathbb{R}_{\ge 0} \}$.  For all $p\in [1,\infty]$, let $d_p (x,y) :=  \left\| x-y \right\|_p$ for all $x,y\in U$, and let $d_p (x,\Delta) :=  \inf_{z\in \Delta} d_p (x,z)$, for all $x\in U$. 

For $p\in [1,\infty]$, the $p$-\emph{Wasserstein distance} between two persistence diagrams $D=\{x_i\}_{i=1,\ldots, m}$ and $D'=\{x'_j\}_{j=1,\ldots, n}$ is defined by
\begin{multline*}
W_p (D,D') :=  \\
\inf_{\alpha} \left\| \left( \left\| (d_p (x_i,x'_{\alpha (i)}))_{i\in I} \right\|_p , \left\| (d_p (x_i,\Delta))_{i\in \{1,\ldots ,m\}\setminus I} \right\|_p , \left\| (d_p (\Delta, x'_j))_{j\in \{1,\ldots ,n\}\setminus \alpha (I)} \right\|_p  \right) \right\|_p ,
\end{multline*}
where the infimum is over all injective functions $\alpha \colon  I \to \{1,\ldots ,n\}$, with $I\subseteq \{1,\ldots ,m\}$.

The \emph{Bottleneck distance}, which is of special importance in the literature on persistent homology, is the distance $W_{\infty}$ corresponding to the case $p=\infty$. We remark that the definition of $p$-Wasserstein distance we use here differs from that of some other authors, which use the metric $d_{\infty}$ on $U$, for all $p\in [1,\infty]$ (see e.g.\ \cite{cohen2010lipschitz}).







Persistence diagrams enjoy crucial stability properties. 
For example, consider a filtration of simplicial complexes constructed as a sublevel-set filtration of a simplicial complex equipped with a function on its set of simplices. Under mild assumptions on this function, the Bottleneck distance on persistence diagrams is stable w.r.t.\ perturbations of the function in $L_{\infty}$ norm \citep{cohen2005stability}. Stability results also exist for other types of data from which a filtration of simplicial complexes may be constructed, e.g.\ Gromov-Hausdorff distance between point clouds. An account of stability results involving the $p$-Wasserstein distance is given in \citep{skraba2020wasserstein}.

\section{Stable ranks}

\paragraph{Definition of stable ranks}
In this section, all persistence diagrams are assumed to have no points at infinity.
Let $V := \{(a,b)\in \mathbb{R}_{\ge 0}^2 \mid a\leq b\}$ be the subset of $U$ (see Section \ref{subsec:PDs}) obtained by removing all points at infinity. Let $F:\mathbb{R}_{\ge 0} \to \mathbb{R}_{\ge 0}$ be an increasing bijection, which we call a \emph{reparameterization} of $\mathbb{R}_{\ge 0}$. Parametrized families of increasing bijections can be produced from regular contours introduced in \cite{chacholski2020metrics}. 
Given a persistence diagram $D\in \mathcal{PD}$, let $F(D)$ denote the persistence diagram obtained by transforming each point $x=(a,b)$ of $D$ into the point $F(x):=(F(a),F(b))$.
The \emph{stable rank} of a persistence diagram $D$ with parameters $(p,F)$, where $p\in [1,\infty]$ and $F$ is a reparameterization of $\mathbb{R}_{\ge 0}$, is the function $\widehat{\operatorname{rank}}_{p,F}(D): \mathbb{R}_{\ge 0} \to \mathbb{N}$ defined by
\[
\widehat{\operatorname{rank}}_{p,F}(D)(t) := \min \{\operatorname{rank}(D') \mid D'\in \mathcal{PD} \text{ and } W_p(F(D),F(D'))\le t \},
\]
where $W_p$ denotes the $p$-Wasserstein distance between persistence diagrams and $\operatorname{rank}(D')$ denotes the cardinality of the persistence diagram $D'$ (or, equivalently, of $F(D')$). The function $\widehat{\operatorname{rank}}_{p,F}(D)$ is nonincreasing and piecewise constant. One way to compare such functions is via the \emph{interleaving distance} $d_{\bowtie}$, see \cite[Def.~9.1]{scolamiero2017multidimensional}. Stable ranks were originally defined for persistence modules (see \cite{scolamiero2017multidimensional, chacholski2020metrics}), algebraic objects which are completely described (up to isomorphism) by the associated persistence diagrams. The stable ranks we introduce here for persistence diagrams can be regarded as an instance of the original definition thanks to the equivalence of Wasserstein distances between persistence modules and persistence diagrams detailed in \cite[Sect.~4.4]{agerberg2023algebraic}. 

\paragraph{Computation of stable ranks}
Let $\ell_F : V \to \mathbb{R}_{\ge 0}$ denote the function defined by $\ell_F(a,b):= F(b)-F(a)$, which we call the \emph{lifetime function} induced by the reparameterization $F$.
Let $D=\{x_i\}_{i=1,\ldots, m}$ be a persistence diagram such that $x_i =(a_i,b_i)\in V$, for all $i\in \{1,\ldots ,m\}$, and suppose that its points are ordered non-decreasingly by their lifetime $\ell_F$, meaning that $\ell_F (a_1,b_1)\le \ell_F(a_2,b_2)\le \cdots \le \ell_F(a_m,b_m)$. 
Then, by \cite[Prop.~5.1]{agerberg2023algebraic}, for any fixed $p\in [1,\infty)$ there exist real numbers $0=t_0 < t_1 < t_2 < \cdots < t_m$ such that the function $\widehat{\operatorname{rank}}_{p,F}(D) \colon  \mathbb{R}_{\ge 0} \to \mathbb{N}$ is constant on the intervals $[t_0,t_1)$, $[t_1,t_2)$,\ldots , $[t_{m-1},t_m)$, $[t_m,\infty)$, with values
$\widehat{\operatorname{rank}}_{p,F}(D)(t_j) = \operatorname{rank} (D) - j$,
for every $j\in \{0,1,\ldots ,m\}$. Furthermore,
\begin{equation}
\label{eq:tj}
t_j= 2^{\frac{1-p}{p}}\left\| ( \ell_F(a_1,b_1),\ldots ,\ell_F(a_j,b_j) )\right\|_p 
\end{equation}
for every $j\in \{1,\ldots ,m\}$.

For $p=\infty$, the sequence of real numbers $(t_j)_j$ defined in Equation (\ref{eq:tj}) (setting $2^{\frac{1-p}{p}}=2^{-1}$) only satisfies $0=t_0 \le t_1 \le t_2 \le \cdots \le t_m$ instead of strict inequalities. Letting $s_k$ denote the $k^{\text{th}}$ smallest value in $\{ t_j\}_j$ one obtains a sequence $0=s_0 < s_1 < s_2 < \cdots < s_{m'}$ such that the stable rank with parameters $(p=\infty,F)$ is constant on the intervals $[s_0,s_1)$,\ldots , $[s_{m'},\infty)$, with values 
\[
\widehat{\operatorname{rank}}_{\infty, F}(D)(s_k) = \operatorname{rank} (D) - \max \{j \mid t_j=s_k \} .
\]

For any fixed $p\in [1,\infty]$, the function $\widehat{\operatorname{rank}}_{p,F}(D)$ can be represented as a vector $r_{p,F}(D) = (r_0, r_1, \ldots ,r_m) \in \mathbb{R}^{m+1}$, where
\[
r_i := t_{m-i} = 2^{\frac{1-p}{p}} \| (\ell_F(a_1,b_1) , \ldots ,\ell_F(a_{m-i},b_{m-i})) \|_p, 
\]
for all $i\in \{0,\ldots ,m-1\}$, and $r_m := 0$. We remark that, thanks to the assumption that the persistence diagram $D$ has no points at infinity, all $t_j$ and all $r_i$ are real numbers. By construction (see \cite[Sect.~5]{agerberg2023algebraic}), the entries of $r_{p,F}(D)$ satisfy $r_i = \min \{t \in \mathbb{R}_{\ge 0} \mid \widehat{\operatorname{rank}}_{p,F}(D) \le i\}$, for all $i\in \{0,\ldots ,m\}$. 

\paragraph{Computing distances between stable ranks}
Since $m=\operatorname{rank}(D)$ depends on the persistence diagram $D$, to compare the vectors $r_{p,F}(D)\in \mathbb{R}^{m+1}$ associated with different persistence diagrams we regard them as elements of the vector space $\mathbb{R}^\infty$ of finite sequences over $\mathbb{R}$, defined by
\[
\mathbb{R}^\infty := \{ (r_0,r_1, \ldots) \in \mathbb{R}^{\mathbb{N}}\mid \text{finitely many entries $r_i$ are nonzero}\}, 
\]
with the natural vector space structure given by entry-wise addition and scalar multiplication. Any vector $r_{p,F}(D)=(r_0,r_1,\ldots ,r_m)\in \mathbb{R}^{m+1}$ is regarded as the element $(r_0,r_1,\ldots ,r_m, 0, \ldots)$ of $\mathbb{R}^{\infty}$, which we will also denote by $r_{p,F}$ with a small abuse of notation. 

Importantly, as a consequence of the results in \cite[Sect.~5]{agerberg2023algebraic}, the interleaving distance between stable ranks can be computed via the $L_{\infty}$ metric in $\mathbb{R}^{\infty}$:
\begin{equation}
\label{eq:interleavingLinfty}
d_{\bowtie}(\widehat{\operatorname{rank}}_{p,F}(D),\widehat{\operatorname{rank}}_{p,F}(D')) = \| r_{p,F}(D) - r_{p,F}(D')\|_{\infty},
\end{equation}
for any $D,D'\in \mathcal{PD}$.

In practice, when using stable ranks in data analysis we work on a subset $\mathcal{S}\subset \mathcal{PD}$ of persistence diagrams which allows us to compare the vectors $\{r_{p,F}(D)\}_{D\in \mathcal{S}}$ via Equation (\ref{eq:interleavingLinfty}) in a finite dimensional space $\mathbb{R}^{d}$ (for a sufficiently large $d$) instead of $\mathbb{R}^{\infty}$. For example, in many practical situations one can consider $\mathcal{S}=\{D\in \mathcal{PD} \mid \operatorname{rank}(D) \le d-1\}$, for a fixed and sufficiently large $d$. 

\paragraph{Stability of stable ranks}
\label{appendix:stab_rank_stability}
Combining the equality in (\ref{eq:interleavingLinfty}) with a stability result for stable ranks \cite[Prop.~9.3]{scolamiero2017multidimensional}, we obtain the inequality
\begin{equation}
\label{eq:1Lipsch_rpF}
\| r_{p,F}(D) - r_{p,F}(D')\|_{\infty} \le W_p(F(D),F(D')),
\end{equation}
valid for any $D,D'\in \mathcal{PD}$ and for any fixed $p\in [1,\infty]$ and reparameterization $F$. In particular, if $F=\operatorname{id}$ is the identity reparameterization of $\mathbb{R}_{\ge 0}$, we see that $r_{p,\operatorname{id}}$ is a $1$-Lipschitz function from $\mathcal{PD}$ equipped with the $p$-Wasserstein distance $W_p$ to $\mathbb{R}^{\infty}$ equipped with the $L_{\infty}$ metric:
\begin{equation}
\label{eq:1Lipsch_rp}
\| r_{p,\operatorname{id}}(D) - r_{p,\operatorname{id}}(D')\|_{\infty} \le W_p(D,D'),
\end{equation}
for all $D,D'\in \mathcal{PD}$.

In the following results we study the Lipschitzianity of the function $r_{p,F}$ with respect to the $p$-Wasserstein distance on $\mathcal{PD}$.

\begin{proposition}
\label{prop:Lipschitz_const}
Let $p\in [1,\infty]$ and let $F:\mathbb{R}_{\ge 0}\to \mathbb{R}_{\ge 0}$ be a reparameterization which is $K$-Lipschitz for some $K>0$, that is, $\lvert F(b) - F(a) \rvert \le K \lvert b-a\rvert $ for all $a,b\in \mathbb{R}_{\ge 0}$. Then
\[
W_p(F(D),F(D')) \le K \, W_p(D,D'),
\]
for all $D,D'\in \mathcal{PD}$. 
\end{proposition}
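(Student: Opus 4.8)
The plan is to exploit the fact that the $p$-Wasserstein distance is defined as an infimum over injective partial matchings, and that applying $F$ point-wise does not change which matchings are admissible. Since $F$ is a bijection, the diagrams $F(D)$ and $F(D')$ have the same cardinalities and multiplicities as $D$ and $D'$ respectively, so the matchings $\alpha \colon I \to \{1,\ldots,n\}$ ranged over in the definition of $W_p(D,D')$ are exactly those ranged over for $W_p(F(D),F(D'))$. It therefore suffices to fix an arbitrary such matching $\alpha$ and show that its cost for $(F(D),F(D'))$ is at most $K$ times its cost for $(D,D')$; taking the infimum over $\alpha$ then yields the claim.

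The work reduces to two point-wise estimates. First, for two matched points $x=(a,b)$ and $y=(c,d)$ I would bound $d_p(F(x),F(y))$: applying the $K$-Lipschitz hypothesis coordinate-wise gives $\lvert F(a)-F(c)\rvert \le K\lvert a-c\rvert$ and $\lvert F(b)-F(d)\rvert \le K\lvert b-d\rvert$, and since $\|\cdot\|_p$ on $\mathbb{R}^2$ is monotone in the absolute values of its arguments and positively homogeneous, this propagates to $d_p(F(x),F(y)) \le K\, d_p(x,y)$, uniformly in $p\in[1,\infty]$. Second, for an unmatched point $x=(a,b)$ contributing its distance to the diagonal, I would let $(c_0,c_0)\in\Delta$ be a point realizing $d_p(x,\Delta)$ (attained, e.g., at $c_0=\tfrac{a+b}{2}$) and observe that, because $F$ is increasing with $F(\mathbb{R}_{\ge 0})=\mathbb{R}_{\ge 0}$, the image $(F(c_0),F(c_0))$ again lies on $\Delta$. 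Hence $d_p(F(x),\Delta) \le d_p(F(x),(F(c_0),F(c_0))) \le K\, d_p(x,(c_0,c_0)) = K\, d_p(x,\Delta)$, where the middle inequality is the matched-point estimate just established.

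With these two inequalities in hand, I would assemble the bound on the full matching cost. Each of the three inner $L_p$ norms appearing in the cost of $\alpha$ is the norm of a vector of nonnegative terms, and every such term is scaled by a factor at most $K$ when passing from $(D,D')$ to $(F(D),F(D'))$; by monotonicity and positive homogeneity of $\|\cdot\|_p$, each inner norm is therefore multiplied by at most $K$, and applying the same two properties to the outer $L_p$ norm shows that the cost of $\alpha$ for $(F(D),F(D'))$ is at most $K$ times its cost for $(D,D')$. Taking the infimum over all admissible $\alpha$ completes the argument. The main obstacle is the diagonal term: unlike the matched contributions, $d_p(x,\Delta)$ is itself an infimum, so the crux is to push a diagonal witness for $x$ forward through $F$ to obtain an admissible diagonal witness for $F(x)$, which works precisely because $F$ maps $\mathbb{R}_{\ge 0}$ monotonically onto itself. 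I would also note that the entire argument is uniform in $p$, with the case $p=\infty$ handled by reading every $L_p$ norm as a maximum, for which monotonicity and positive homogeneity still hold.
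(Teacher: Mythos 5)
Your proposal is correct and follows essentially the same route as the paper: fix a partial matching, bound each of the three cost components term-by-term by a factor of $K$ using the coordinate-wise Lipschitz estimate together with monotonicity and homogeneity of $\|\cdot\|_p$, then take the infimum over matchings. The only cosmetic difference is the diagonal term, where the paper computes $d_p(F(x),\Delta)$ exactly via the midpoint $\left(\tfrac{F(a)+F(b)}{2},\tfrac{F(a)+F(b)}{2}\right)$, while you push the witness $\left(\tfrac{a+b}{2},\tfrac{a+b}{2}\right)$ forward through $F$ and reuse the matched-point estimate---both are valid.
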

\begin{proof}
Let $D=\{x_i\}_{i=1,\ldots, m}$ and $D'=\{x'_j\}_{j=1,\ldots, n}$ be two persistence diagrams, with $x_i =(a_i,b_i)\in V$ for all $i\in \{1,\ldots ,m\}$ and $x'_j =(a'_j,b'_j)\in V$ for all $j\in \{1,\ldots ,n\}$.  
For any injective function $\alpha :I\to \{1,\ldots ,n\}$ defined on a subset $I\subseteq \{1,\ldots ,m\}$, we consider the real numbers:
\begin{align*}
u_{p,F,\alpha} &:= \left\| (d_p (F(x_i),F(x'_{\alpha (i))}))_{i\in I} \right\|_p , \\   
v_{p,F,\alpha} &:= \left\| (d_p (F(x_i),\Delta))_{i\in \{1,\ldots ,m\}\setminus I} \right\|_p , \\
w_{p,F,\alpha} &:= \left\| (d_p (\Delta, F(x'_j)))_{j\in \{1,\ldots ,n\}\setminus \alpha (I)} \right\|_p .
\end{align*}
By definition (see Section \ref{appendix:dists_pers_diags}) we have $W_p(F(D),F(D'))= \inf_{\alpha} \| (u_{p,F,\alpha}, v_{p,F,\alpha}, w_{p,F,\alpha}) \|_p$ and $W_p(D,D')= \inf_{\alpha} \| (u_{p,\operatorname{id},\alpha}, v_{p,\operatorname{id},\alpha}, w_{p,\operatorname{id},\alpha}) \|_p$. Recall the following property of the $p$-norm: if $0\le u_k \le u'_k$ for $k\in \{1,\ldots ,\ell\}$, then $\| (u_1, \ldots ,u_\ell) \|_p \le \| (u'_1, \ldots ,u'_\ell) \|_p$. To prove the claim, it is therefore sufficient to show that, for any fixed $\alpha$, the following inequalities hold:
\[
u_{p,F,\alpha} \le K u_{p,\operatorname{id},\alpha}, \quad v_{p,F,\alpha} \le K v_{p,\operatorname{id},\alpha}, \quad w_{p,F,\alpha} \le K w_{p,\operatorname{id},\alpha},
\]
where $K$ is the Lipschitz constant of $F$. 

To prove the inequality $u_{p,F,\alpha} \le K u_{p,\operatorname{id},\alpha}$ it is sufficient to show that, for any fixed $i\in I$, we have 
\[
d_p (F(x_i),F(x'_{\alpha (i)})) \le K d_p (x_i,x'_{\alpha (i)}).
\]
This follows from the calculation:
\begin{align*}
d_p (F(x_i),F(x'_{\alpha (i)})) &= \left\| \left(F(a_i) - F(a'_{\alpha (i)}), F(b_i) - F(b'_{\alpha (i)})\right) \right\|_p \\
&= \left\| \left( \lvert F(a_i) - F(a'_{\alpha (i)})\rvert , \lvert F(b_i) - F(b'_{\alpha (i)}) \rvert \right) \right\|_p \\
&\le \left\| \left( K \lvert a_i - a'_{\alpha (i)}\rvert , K \lvert b_i - b'_{\alpha (i)} \rvert \right) \right\|_p \\
&= K \left\| \left( a_i - a'_{\alpha (i)}, b_i - b'_{\alpha (i)} \right) \right\|_p \\
&=  K  d_p (x_i,x'_{\alpha (i)}).
\end{align*}

To prove the inequality $v_{p,F,\alpha} \le K v_{p,\operatorname{id},\alpha}$ it is sufficient to show that, for any fixed $i\in \{1,\ldots ,m\} \setminus I$, we have 
\[
d_p (F(x_i),\Delta) \le K d_p (x_i,\Delta).
\]
For any $x=(a,b)\in V$, it is easy to show that $d_p (x,\Delta)= d_p (x,\overline{z})$ with $\overline{z}:= (\frac{a+b}{2},\frac{a+b}{2})$.
The inequality then follows from the calculation:
\begin{align*}
d_p (F(x_i),\Delta ) &= \left\| \left(F(a_i) - \frac{F(a_i)+F(b_i)}{2}, F(b_i) - \frac{F(a_i)+F(b_i)}{2}\right) \right\|_p \\
&= \left\| \left( \frac{F(a_i)-F(b_i)}{2}, \frac{F(b_i)-F(a_i)}{2}\right) \right\|_p \\
&= \left\| \left( \frac{F(b_i)-F(a_i)}{2}, \frac{F(b_i)-F(a_i)}{2}\right) \right\|_p \\
&\le \left\| \left( \frac{K(b_i-a_i)}{2}, \frac{K(b_i-a_i)}{2}\right) \right\|_p \\
&= K \left\| \left( \frac{(b_i-a_i)}{2}, \frac{(b_i-a_i)}{2}\right) \right\|_p \\
&= K d_p (x_i,\Delta ).
\end{align*}

The inequality $w_{p,F,\alpha} \le K w_{p,\operatorname{id},\alpha}$ can be proven similarly. This completes the proof.
\end{proof}

\begin{corollary}
\label{coro:Lipschitz_const}    
Let $p\in [1,\infty]$ and let $F:\mathbb{R}_{\ge 0}\to \mathbb{R}_{\ge 0}$ be a reparameterization which is continuous and differentiable, with $F'$ bounded. Then
\[
W_p(F(D),F(D')) \le \sup F' \, W_p(D,D'),
\]
for all $D,D'\in \mathcal{PD}$. 
\end{corollary}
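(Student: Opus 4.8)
The plan is to deduce this immediately from Proposition \ref{prop:Lipschitz_const} by showing that the hypotheses here force $F$ to be Lipschitz with constant $\sup F'$. First I would invoke the Mean Value Theorem: for any $a,b\in \mathbb{R}_{\ge 0}$ with $a<b$, there exists $c\in (a,b)$ such that $F(b)-F(a)=F'(c)(b-a)$, whence $\lvert F(b)-F(a)\rvert \le (\sup F')\,\lvert b-a\rvert$. The case $a=b$ is trivial, and by symmetry this covers all pairs. This identifies $K:=\sup F'$ as a valid Lipschitz constant for $F$ in the sense required by Proposition \ref{prop:Lipschitz_const}.

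Second, I would verify that $K=\sup F'>0$, so that the hypothesis $K>0$ of Proposition \ref{prop:Lipschitz_const} is genuinely met. Since $F$ is an increasing bijection of $\mathbb{R}_{\ge 0}$, its derivative is nonnegative and cannot vanish identically (otherwise $F$ would be constant, contradicting bijectivity), so $\sup F'>0$; it is finite by the assumption that $F'$ is bounded. With $K=\sup F'$ now in hand, the desired inequality $W_p(F(D),F(D'))\le \sup F'\,W_p(D,D')$ is precisely the conclusion of Proposition \ref{prop:Lipschitz_const} applied with this value of $K$.

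There is no real obstacle here: all the substantive work — the termwise comparisons $u_{p,F,\alpha}\le K\,u_{p,\operatorname{id},\alpha}$, $v_{p,F,\alpha}\le K\,v_{p,\operatorname{id},\alpha}$, $w_{p,F,\alpha}\le K\,w_{p,\operatorname{id},\alpha}$ and the ensuing infimum over matchings $\alpha$ — has already been carried out in Proposition \ref{prop:Lipschitz_const}. The only mild point to keep in mind is interpreting ``$F'$ bounded'' as $\sup F'<\infty$ and checking that the Mean Value Theorem applies up to and including the boundary point $0$ of the half-line; both are immediate from the stated continuity and differentiability of $F$. I expect the entire argument to be a one-line reduction once the Lipschitz constant is read off via the Mean Value Theorem.
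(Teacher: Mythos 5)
Your proof is correct and follows essentially the same route as the paper's: both use the Mean Value Theorem to read off $\sup F'$ as a Lipschitz constant for $F$ and then reduce to Proposition \ref{prop:Lipschitz_const}. The only (immaterial) difference is that you establish the Lipschitz bound globally on $\mathbb{R}_{\ge 0}$ and invoke the proposition as a black box, while the paper restricts $F$ to a compact interval containing the diagram points and reruns the proposition's argument there; your added check that $\sup F' > 0$ is a small point the paper leaves implicit.
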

\begin{proof}
Given any two persistence diagrams $D=\{x_i\}_{i=1,\ldots, m}$ and $D'=\{x'_j\}_{j=1,\ldots, n}$, with $x_i =(a_i,b_i)\in V$ for all $i\in \{1,\ldots ,m\}$ and $x'_j =(a'_j,b'_j)\in V$ for all $j\in \{1,\ldots ,n\}$, consider a closed interval $[c,d]\subset \mathbb{R}$ containing all $a_i,b_i,a'_j,b'_j$.  
By the Mean Value Theorem, the restriction of $F$ to $[c,d]$ is $K$-Lipschitz for $K=\sup F'$. Proceeding as in Proposition \ref{prop:Lipschitz_const}, we obtain the inequality $W_p(F(D),F(D')) \le \sup F' \, W_p(D,D')$. Since the Lipschitz constant $\sup F'$ does not depend on the persistence diagrams $D$ and $D'$, the claim follows.
\end{proof}

Equation (\ref{eq:1Lipsch_rpF}) combined with Proposition \ref{prop:Lipschitz_const} shows that, whenever $F$ is $K$-Lipschitz, the function $r_{p,F}$ from $\mathcal{PD}$ equipped with the $p$-Wasserstein distance $W_p$ to $\mathbb{R}^{\infty}$ equipped with the $L_{\infty}$ metric is $K$-Lipschitz: 
\begin{equation*}
\| r_{p,F}(D) - r_{p,F}(D')\|_{\infty} \le K\, W_p(D,D'),
\end{equation*}
for all $D,D'\in \mathcal{PD}$.
In particular, by Corollary \ref{coro:Lipschitz_const}, $r_{p,F}$ is $(\sup F')$-Lipschitz whenever $F$ is differentiable with $\sup F'<\infty$.
In practice, one can construct a differentiable parameterization $F$ with bounded derivative by choosing a bounded integrable function $f:\mathbb{R}_{\ge 0} \to \mathbb{R}_{> 0}$ and setting $F(t) =\int_0^t f\, d\lambda$, for all $t\in \mathbb{R}_{\ge 0}$.

\section{Experiments}

\subsection{ORBITS5K dataset}
\label{subsec:ORBdataset}

Starting from an initial random position $(x_0, y_0) \in [0, 1]^2$ and a parameter $r > 0$, the point cloud $(x_n, y_n)_{n=1,...,1000}$ is generated using the following system:

\[
x_{n+1} = (x_n + r y_n(1 - y_n)) \mod 1
\]
\[
y_{n+1} = (y_n + r x_{n+1}(1 - x_{n+1})) \mod 1
\]

The orbits' behavior is highly sensitive to $r$, with some values leading to void formations, making persistence diagrams effective for classifying these orbits by $r$. Following \cite{carriere2020perslay}, we simulate orbits using five $r$ values: 2.5, 3.5, 4.0, 4.1, and 4.3. 1000 orbits are generated for each $r$, resulting in 5,000 point clouds.

The point clouds are transformed into persistence diagrams using the Alpha complex filtration and simplicial homology in degree 0 and 1. The Alpha complex filtration is a method to generate a filtered simplicial complex, starting from a point cloud \( P \) in \( \mathbb{R}^d \). The Alpha complex \( \mathcal{A}_\alpha \) at a scale parameter \( \alpha \) is constructed by first considering the union of balls centered at the points in \( P \) with radius \( \alpha \). The simplices in \( \mathcal{A}_\alpha \) correspond to the simplices in the Delaunay triangulation of \( P \) that are enclosed within the union of these balls. As \( \alpha \) increases, the Alpha complex grows, generating the filtration \( \{ \mathcal{A}_\alpha \}_{\alpha \geq 0} \).

The coordinates of the persistence diagrams are scaled by $1000$ to make the reporting of Wasserstein distances more convenient (e.g.\ in Table \ref{tab:avgdisth1}). See Figure \ref{fig:image_orbits5k} for examples of samples in the dataset.

\begin{figure}[htbp]
\floatconts
  {fig:image_orbits5k}
  {\caption{An example sample for each class in the ORBIT5K dataset.}}
  {\includegraphics[width=1\linewidth]{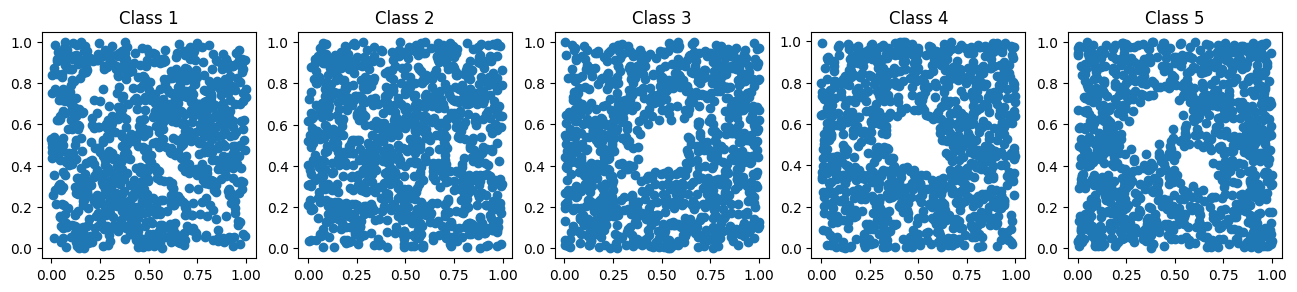}}
\end{figure}

\subsection{Implementation details for other methods}
\label{appendix:impl_details_other}

Table \ref{tab:accs_all_methods} compares the accuracy of SRN (Section~\ref{SRN}) on the ORBITS5K dataset with that of other methods, as reported in the article introducing Perslay \citep{carriere2020perslay}. PSS-K, PWG-K, SW-K, PF-K stand respectively for Persistence Scale Space Kernel \citep{reininghaus2015stable}, Persistence Weighted Gaussian Kernel \citep{kusano2016persistence}, Sliced Wasserstein Kernel \citep{carriere2017sliced} and Persistence Fisher Kernel \citep{le2018persistence}.

In order to test the robustness of Perslay in PyTorch we reimplement it with the following DeepSet \citep{zaheer2017deep} architecture, similar to the architecture described in \cite{carriere2020perslay}: each point in the persistence diagram is upsampled from 2 dimensions to 25 through 2 layers. The point embeddings are further aggregated with a top 5 aggregation for each dimension, which is composed with a final linear layer.

\begin{table}[hbtp]
\floatconts
    {tab:accs_all_methods}
    {\caption{Accuracy of our method and of other methods on the ORBITS5K dataset as reported in \cite{carriere2020perslay}. See details in Appendix \ref{appendix:impl_details_other}.}}
    {\begin{tabular}{lllllll}
    \toprule
         & \bfseries SRN ($H_1$ only) & \bfseries PSS-K & \bfseries PWG-K & \bfseries SW-K & \bfseries PF-K & \bfseries PersLay\\
         \midrule
         Accuracy & $79.6 (\pm 0.3)$ & $72.38 (\pm 0.7)$ & $76.63 (\pm 0.9)$ & $83.6 (\pm 0.9) $ & $85.9 (\pm 0.8)$ & $87.7 (\pm 1.0)$ \\
         \bottomrule
    \end{tabular}}
\end{table}

\subsection{Adversarial attacks}
\label{appendix:adv_attacks}

\cite{carlini2017towards} describe a method to find adversarial examples for data in $L_p$ spaces. We adapt the method to data in the space of persistence diagrams endowed with the $p$-Wasserstein distance $W_p$, with $p\in [1,\infty]$. The optimization objective becomes:
\[
\min_{D'} W_p(D,D') - \lambda \mathcal{L}(g(D')),
\]
where $\lambda$ is a hyperparameter and $\mathcal{L}$ is the loss of the neural network (i.e.\ cross-entropy). The Wasserstein distance between two persistence diagrams is a differentiable function of the coordinates of the points in the persistence diagrams \citep{carriere2021optimizing}. We initialize $D'$ as $D$, to which a fixed number of persistence diagram points may be added and initialized close to the diagonal $\Delta$. The gradient w.r.t.\ $D'$ is computed using PyTorch and the \textit{PyTorch Topological} package and Projected Gradient Descent iterations are taken, where the data is projected to the feasible area of the persistence diagram (i.e.\ the set of points $(a,b)\in \mathbb{R}$ with $0\le a\le b$).

\subsection{Implementation details for Stable Rank Network}
\label{appendix:impl_details_sr_linf}

\begin{figure}[htbp]
\floatconts
  {fig:figure_pipeline}
  {\caption{A Persistent Homology Machine Learning pipeline using Stable Rank Network.}}
  {\includegraphics[width=1\linewidth]{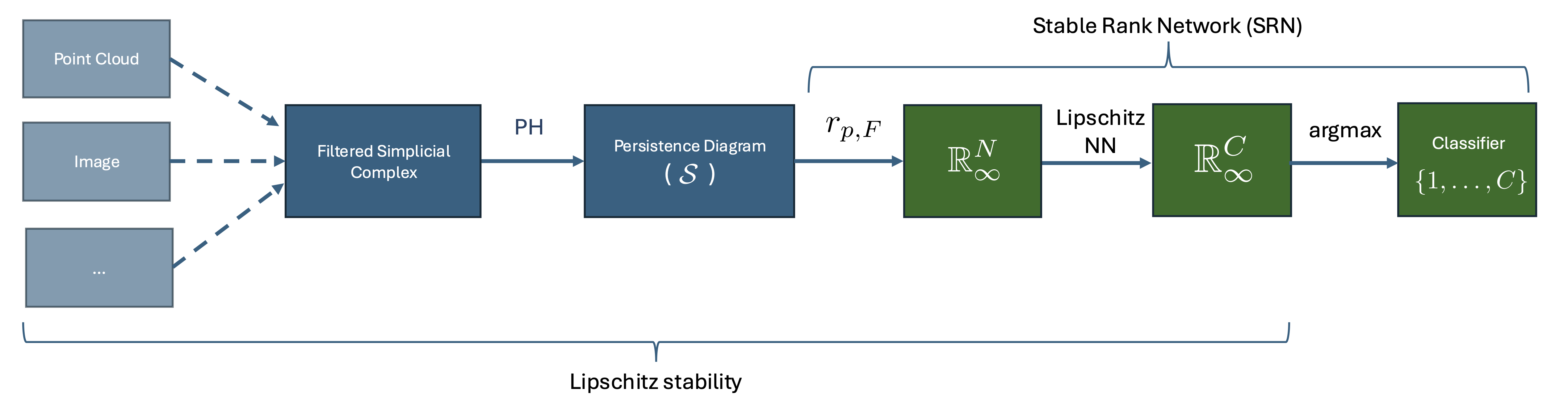}}
\end{figure}

To simplify the analysis in terms of Wasserstein distance we choose to work with only one persistence diagram per sample, i.e.\ choosing one homology degree. We choose $H_1$ as it is more distinctive.

Following the discussion in Appendix \ref{appendix:stab_rank_stability}, $F$ should be a learnable function (i.e.\  a parametrization of $F$ with parameters that can be optimized when training SRN is needed) and  $\sup{F'}$ should be easily computable. In our experiments we choose $f$ to be a Gaussian mixture and as in Appendix \ref{appendix:stab_rank_stability}, $F(t) =\int_0^t f\, d\lambda$, for all $t\in \mathbb{R}_{\ge 0}$. $f$ has learnable weights, means and standard deviations of the Gaussian components. For the ORBIT5K dataset, similar classification performance was achieved by choosing $F$ to be the identity function, which is what is reported in Table \ref{tab:rob_accs}, and Table \ref{tab:accs_all_methods}.

For the Lipschitz neural network (Section \ref{subsec:LipNN}) we use 5 layers with sizes $(1200, 700, 300, 80, 5)$. After each layer, centering was applied (i.e.\ batch normalization but where only subtraction by mean feature-wise is applied, not division by standard deviation feature-wise). The weights of the first layer were initialized uniformly at random but scaled proportionally to the standard deviation of features in the dataset.

\begin{figure}[htbp]
\floatconts
  {fig:dist_distances}
  {\caption{Distribution of certified $\epsilon$-robustness for correctly classified samples of the test set, for the different classes.}}
  {\includegraphics[width=1\linewidth]{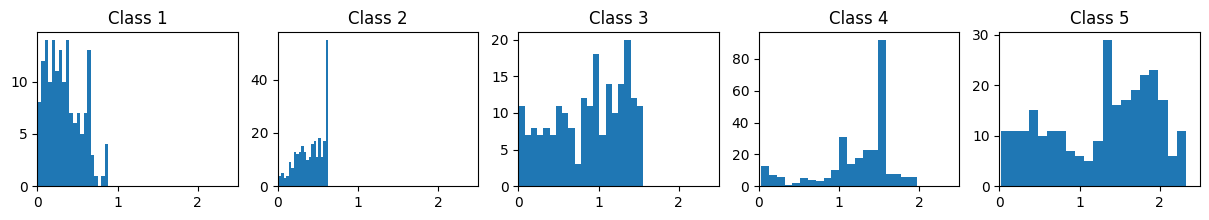}}
\end{figure}

\begin{table}[hbtp]
\floatconts
    {tab:avgdisth1}
    {\caption{Average bottleneck distances between $H_1$ PDs for samples of different classes in the dataset.}}
    {\begin{tabular}{l|lllll}
    \toprule
         Class & \bfseries 1 & \bfseries 2 & \bfseries 3 & \bfseries 4 & \bfseries 5 \\
         \midrule
        \bfseries 1     & 2.383 & 2.719 & 3.533 & 17.334 & 9.011 \\
        \bfseries 2     &         & 0.914 & 4.115 & 19.439 & 11.038 \\
        \bfseries 3     &         &         & 2.536 & 15.492 & 7.750 \\
        \bfseries 4     &         &         &         & 4.077  & 9.535 \\
        \bfseries 5     &         &         &         &         & 5.043 \\
         \bottomrule
    \end{tabular}}
\end{table}

\end{document}